\documentclass[letterpaper, 10 pt, conference]{ieeeconf}
\IEEEoverridecommandlockouts 
\overrideIEEEmargins            

\usepackage{dsfont}

\usepackage{amsthm}
\newtheorem{definition}{Definition}
\newtheorem{theorem}{Theorem}
\newtheorem{lemma}{Lemma}

\usepackage{amsmath}
\usepackage{amssymb}

\usepackage{algorithm}
\usepackage{algorithmic}

\usepackage{placeins}
\usepackage{graphicx}
\usepackage{float}
\usepackage{subfigure} 
\usepackage{hyperref}
\usepackage{wrapfig}
\linespread{0.97}
\title{\LARGE \bf
Reinforcement Learning for Safe Robot Control \\ using Control Lyapunov Barrier Functions
}

\author{Desong Du$^{*1,2}$, Shaohang Han$^{*2}$, Naiming Qi$^{1}$, Haitham Bou Ammar$^{3,4}$, Jun Wang$^{4}$ and Wei Pan$^{5,2}$
\thanks{*indicates equal contribution}
\thanks{The work is supported by Huawei and China Scholarship Council No.202006120130.}
\thanks{$^{1}$School of Astronautics, Harbin Institute of Technology, China. $^{2}$Department of Cognitive Robotics, Delft University of Technology, Netherlands. $^{3}$Huawei Technologies, United Kingdom. $^{4}$Department of Computer Science, University College London, United Kingdom. $^{5}$Department of Computer Science, University of Manchester, United Kingdom.}%
}

\begin{document}

\maketitle
\thispagestyle{empty}
\pagestyle{empty}

\begin{abstract}

Reinforcement learning (RL) exhibits impressive performance when managing complicated control tasks for robots. However, its wide application to physical robots is limited by the absence of strong safety guarantees. To overcome this challenge, this paper explores the control Lyapunov barrier function (CLBF) to analyze the safety and reachability solely based on data without explicitly employing a dynamic model. We also proposed the Lyapunov barrier actor-critic (LBAC), a model-free RL algorithm, to search for a controller that satisfies the data-based approximation of the safety and reachability conditions. The proposed approach is demonstrated through simulation and real-world robot control experiments, i.e., a 2D quadrotor navigation task. The experimental findings reveal this approach's effectiveness in reachability and safety, surpassing other model-free RL methods.

\end{abstract}

\section{Introduction}
Reinforcement learning (RL) has achieved impressive and promising results in robotics, such as manipulation \cite{nagabandi2020deep}, unmanned vehicle navigation \cite{kahn2017uncertainty}, drone flight \cite{lambert2019low,Belkhale2021model}, etc., thanks to its ability of handling intricate models and adapting to diverse problem scenarios with ease. Meanwhile, a safe control policy is imperative for a robot in the real world, as dangerous behaviors can cause irreparable damage or costly losses. Therefore, the RL methods that can provide a safety guarantee for robot control have received considerable interest and progress \cite{achiam2017constrained,thomas2021safe,tessler2018reward,cheng2019end,pham2018optlayer,ohnishi2019barrier}.

A recent line of work focuses on designing novel RL algorithms, e.g., actor-critic, for constrained Markov Decision Process (CMDP). In these methods, the system encourages the satisfaction of the constraints by adding a constant penalty to the objective function \cite{thomas2021safe} or constructing safety critics while doing policy optimization in a multi-objective manner \cite{achiam2017constrained,tessler2018reward, srinivasan2020learning, shen2014risk}. Although these approaches are attractive for their generality and simplicity, they either need model \cite{thomas2021safe}, or only encourage the safety constraints to be satisfied probabilistically.

\begin{figure}[htp] 
    \centering
    \vspace{-0.2cm}
    \includegraphics[width=0.45\textwidth]{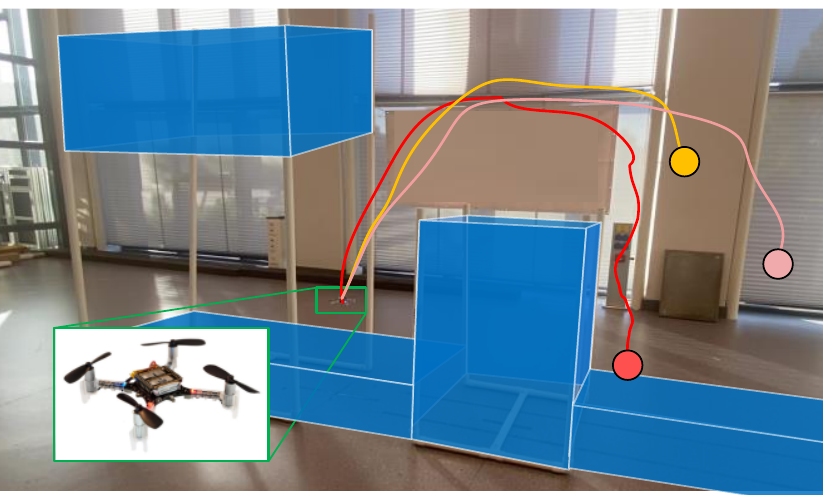}
    \caption{The 2D quadrotor navigation task. Lines stand for trajectories. The circles are the initial position. The blue regions represent obstacles. Video is available at \url{https://youtu.be/_8Yr_QRRYik}.
    }
    \label{fig:experiment_setup}
\end{figure}

An alternative type of methods focuses on reachability and safety guarantee (sufficient conditions) by constructing/learning control Lyapunov functions (CLF) and control barrier functions (CBF) that can respectively certify the reachability and safety \cite{cheng2019end,ohnishi2019barrier,taylor2019episodic,Grandia2021multi,Nguyen20163D,gurriet2018towards,jagtap2020control,choi2020reinforcement}. The relevant safe controllers are normally designed by adding a safety filter to a reference controller, such as a RL controller \cite{cheng2019end,ohnishi2019barrier,taylor2019episodic}, a model predictive control (MPC) controller \cite{Grandia2021multi}, etc. Unfortunately, these approaches have two disadvantages: (1) there might be conflicts between CLFs and CBFs as separate certificates \cite{dawson2022safe, meng2022smooth} (see Figure~\ref{fig:clf-cbf} in Section~\ref{sec:clfcbf}); (2) the CLFs and CBFs are generally non-trivial to find \cite{dawson2022safe}, especially for nonlinear systems. Even though there are learning methods to find CLFs and CBFs, knowledge of dynamic models has to be explicitly used \cite{jin2020neural}.

In this paper, we propose a data-based reachability and safety theorem without explicitly using the knowledge of a dynamic system model. The contribution of this paper can be summarized as follows:  
 (1) we used samples to approximate the critic as a control Lyapunov barrier function (CLBF), a single unified certificate, which is parameterized by deep neural networks, so as to guarantee both reachability and safety. The corresponding actor is a controller that satisfies both the reachability and safety guarantees.
 (2) we deploy the learned controller to a real-world robot, i.e., a Crazyflie 2.0 quadrotor, for a 2D quadrotor navigation task. The 2D quadrotor navigation task is shown as in Figure~\ref{fig:experiment_setup}. The experiments show our approach has better performance than other model-free RL methods. Our approach, by using CLBFs, can avoid conflicts between the CLFs and CBFs certificates. Compared to the model-based approaches that learn CLBFs using supervised learning \cite{dawson2022safe} or handcraft CLBFs \cite{romdlony2016}, our method does not need the knowledge of models explicitly.

\section{Related Works}

Prior work has studied safety in RL in several ways, including imposing constraints on expected return \cite{achiam2017constrained,tessler2018reward}, risk measures such as Conditional Value at Risk and percentile estimates \cite{shen2014risk,geibel2005risk,tamar2014policy}, and avoiding regions where constraints are violated \cite{berkenkamp2017safe,turchetta2016safe,thananjeyan2021recovery}. This paper focuses on the reach-avoid problem that belongs to the last situation.

To solve the reach-avoid problem, a popular strategy involves modifying the policy optimization procedure of standard RL algorithms to reason about task rewards and constraints simultaneously. One method is constrained policy optimization (CPO), which adds a constraint-related cost to the policy objective \cite{achiam2017constrained}. Another type of method tries to optimize a Lagrangian relaxation \cite{tessler2018reward,srinivasan2020learning,geibel2005risk,thananjeyan2021recovery,bharadhwaj2020conservative}. They normally use a safety critic to ensure safety, but this separate critic can only evaluate risk in a probabilistic way.
Other methods involve constructing Lyapunov functions for the unsafe region \cite{chow2018lyapunov,chow2019lyapunov}. However, these approaches require a baseline policy that already satisfies the constraints.

\section{Preliminaries and Background}

In RL for safe control, the dynamical system is typically characterized by CMDP $\hat{M}=(\mathcal{S}, \mathcal{A}, P, c, \gamma, \mathcal{I})$ \cite{altman1999constrained}. $s_{t} \in  \mathcal{S} \subseteq \mathbb{R}^{n}$ is the state vector at time $t$, $S$ denotes the state space. The agent then takes an action $a_{t} \in  \mathcal{A} \subseteq  \mathbb{R}^{m}$ according to a stochastic policy/controller $\pi\left(a_{t} \mid s_{t}\right)$. The transition of the state is dominated by the transition probability density function $P\left(s_{t+1} \mid s_{t}, a_{t}\right)$, which denotes the probability density of the next state $s_{t+1}$. A cost function $c(s_{t},a_{t})$ is used to measure the immediate performance of a state-action pair $(s_{t},a_{t})$, and $\mathcal{I}(s_{t})$ indicates whether the state violates the safety constraints or not. The goal is to find $\pi^{*}$ that can minimize the objective function return the expected return $J(\pi) \triangleq \sum_{t=1}^{\infty} \mathbb{E}_{s_{t}, a_{t}} \gamma^{t} c\left(s_{t}, a_{t}\right)$ with the discount factor $\gamma \in[0,1)$, and $\forall t \in \mathbb{Z}_{+}, \mathcal{I}(s_t)=0$. 
Moreover, some notations are to be defined. The closed-loop state distribution at a certain instant $t$ as $p(s \mid \rho, \pi, t)$, which can be defined iteratively: $p\left(s^{\prime} \mid \rho, \pi, t+1\right)=\int_{\mathcal{S}} P(s^{\prime}|s,\pi(s)) p(s \mid \rho, \pi, t) \mathrm{d} s, \forall t \in \mathbb{Z}_{+}$ and $p(s \mid \rho, \pi, 0) = \rho(s)$. 

In this paper, we focus on the reach-avoid problems, in which the agent reaches the goal condition and avoids certain unsafe conditions. It is defined as follows:
\begin{definition}
\label{def:state}
(Reach-Avoid Problem). In a CMDP setting with a goal configuration $s_{\text{goal}}$ and a set of unsafe states $\mathcal{S}_{\text{unsafe}} \subseteq \mathcal{S}$, find a controller $\pi^{*} \left(a|s\right)$ such that all trajectories $s_{t}$ under $P\left(s_{t+1} \mid s_{t}, a_{t}\right)$, and $s_{0} \in \mathcal{S}_{\text{initial}} \subseteq \mathcal{S}$ have the following properties: \textbf{Reachability}: given a tolerance $\delta$, $\exists T \geq 0$, such that $\mathbb{E}_{s_{t}}\left\|s_{t}-s_{\text{goal}}\right\| \leq \delta$, $\forall t \geq t_{0}+T$; \textbf{Safety}: $\mathbb{P}(s_{t} \notin \mathcal{S}_{\text{unsafe}}\mid s_{0}, \pi, t)=\int_{\mathcal{S} \setminus \mathcal{S}_{\text{unsafe}}} p(s \mid s_{0}, \pi, t) \mathrm{d} s = 0$, $ \forall t \geq t_{0}$.
\end{definition}

The state $s_{\text{irrecoverable}} \in \mathcal{S}_{\text{\text{irrecoverable}}} \not\subset \mathcal{S}_{\text{unsafe}}$ are not themselves unsafe, but inevitably lead to unsafe states under the controller $\pi$. Thus, we also consider $s_{\text{irrecoverable}}$ to be unsafe for the given controller $\pi$.

\begin{definition}
\label{def:state}
A state is said to be \textbf{irrecoverable} if $s\notin \mathcal{S_{\text{unsafe}}}$ under the controller $a\sim\pi(a|s)$, the trajectory defined by $s_{0}=s$ and $s_{t+1}\sim P(s_{t+1}|s_{t},\pi(s_{t}))$ satisfies $\mathbb{P}(s_{t} \in \mathcal{S}_{\text{unsafe}}\mid s_{0}, \pi, t)=\int_{\mathcal{S}_{\text{unsafe}}} p(s \mid s_{0}, \pi, t) \mathrm{d} s \neq 0$, $\exists   \hat{t}>t_{0}$. 
\end{definition}

Therefore, the safety and unsafety of a certain state can be described as: the state $s\in \mathcal{S}_{\overline{\text{unsafe}}}=\mathcal{S}_{\text{irrecoverable}} \cup \mathcal{S}_{\text{unsafe}}$ is unsafe, while the state $s\in \mathcal{S}_{\overline{\text{safe}}} = \mathcal{S} \backslash \mathcal{S}_{\overline{\text{unsafe}}}$ is safe.

In reach-avoid problems, CLFs and CBFs are widely used to ensure reachability and safety of the system \cite{jin2020neural}, respectively. To avoid the conflicts between separate certificates, we rely on the CLBF, a single unifying certificate for both reachability and safety \cite{romdlony2016}. In this paper, the definition of the CLBF is related to \cite{dawson2022safe}. We extend it from a continuous-time system to CMDP (similar to the definition of CBF in discrete-time system \cite{agrawal2017discrete}).
 In CMDP, the definition of CLBF is given as follows.
\begin{definition}
\label{def:CLBF}
(CLBF). A function V: $\mathcal{S} \rightarrow \mathbb{R}$ is a CLBF, for some constant $\hat{c}$, $\lambda>0$, \textcircled{1} $V\left(s_{\text{goal}}\right)=0$, \textcircled{2} $V(s)>0, \forall s \in \mathcal{S} \backslash \mathcal{S}_{\text{goal}}$, \textcircled{3} $V(s)\geq \hat{c}, \forall s \in \mathcal{S}_{\overline{\text{unsafe}}}$, \textcircled{4} $V(s)<\hat{c}, \forall s \in \mathcal{S}_{\overline{\text{safe}}}$. \textcircled{5} there exists a controller $\pi$, such that $\mathbb{E}_{s^{\prime}} [V\left(s^{\prime}\right)-V(s)+\lambda V(s) ]\leq 0, \forall s \in \mathcal{S} \backslash s_{\text{goal}}$, where $s^{\prime} \sim P(s^{\prime}|s,\pi(s))$.
\end{definition}
Thus, any controller 
$\pi \in \{\pi \mid \mathbb{E}_{s^{\prime}} [V (s^{\prime})-V(s)+\lambda V(s) ]\leq 0, s^{\prime} \sim P(s^{\prime}|s,\pi(s))\}$
can satisfy reachability and safety \cite{dawson2022safe}. In this definition, the transition $P(s^{\prime}|s,\pi(s))$ requires the knowledge of a dynamic system model, but modeling error can hardly be avoided in reality. Next, we will show how we can use model-free RL to learn CLBFs and controllers with reachability and safety guarantee.

\section{Reinforcement Learning Algorithm with Safety Guarantee}
\label{sec:theory}
In an actor-critc framework, the high-level plan is as follows. We first choose the value function $V(s)$ to be the CLBF, similar to those done in approximate/adaptive dynamic programming~\cite{jiang2020learning} on choosing the Lyapunov function. Then we expect to impose some properties of CLBF as constraints in the Bellman recursion to find the value function (i.e., CLBF) and hope to search the corresponding policy, similar to what is done in \cite{berkenkamp2017safe, chow2018lyapunov,han2020actor}. Conceptually, we are interested in the following conceptual problem formulation:

\textbf{Repeat}
\begin{itemize}
    \item Find:  V.  \text{           Subject to: CLBF constraints}
    \item Find: $\pi$ using V
\end{itemize}

\textbf{Untill}  V, $\pi$ convergence. 

\subsection{CLBF as Critic}
\label{sec:clbf}
To enable the actor-critic learning, the control Lyapunov barrier critic $Q_{\text{LB}}$ is designed to be dependent on $s$ and $a$, while $V(s)=Q_{\text{LB}}(s,\pi_{\theta}(s))$.
Then we present a method to construct a $Q_{\text{LB}}$ through the Bellman recursion. The target function $Q_\text{target}$ is a valid control Lyapunov barrier critic which is approximated by:
\begin{equation}
    Q_{\text{target }}(s_{t}, a_{t})=c(s_{t}, a_{t})+\gamma Q_{\text{LB}}^{\prime}\left(s_{t+1}, \pi(s_{t+1})\right)
\end{equation}
where $Q_{\text{LB}}^{\prime}$ is the network that has the same structure as $Q_{\text{LB}}$, but parameterized by a different set $\phi^{\prime}$, as typically used in the actor-critic methods \cite{lillicrap2015continuous,haarnoja2018soft}. The parameter $\phi^{\prime}$ is updated through exponential moving average of weights controlled by a hyperparameter $\tau \in \mathbb{R}_{(0,1)}, \phi_{k+1}^{\prime} \leftarrow \tau \phi_{k}+(1-\tau) \phi_{k}^{\prime}$.

Such that the value function meets the requirements of our main theorem (Theorem \ref{theorem:clbf} in Section \ref{sec:result}), the tuples $\{s_{t},a_{t},c(s_{t},a_{t}),s_{t+1}\}$ are set as follows:
\begin{equation}
\label{eq:cost}
\begin{cases}
\{s_{t},a_{t},0, s_{t}\} & s_{t} \in \mathcal{S}_{\text{goal}} \\
\{s_{t},a_{t},c(s_{t}, a_{t}), s_{t+1}\} & s_{t} \in \mathcal{S}_{\text{safe}}\setminus \mathcal{S}_{\text{goal}} \\
\{s_{t},a_{t},C, s_{t}\} & s_{t} \in \mathcal{S}_{\text{unsafe }}
\end{cases}
\end{equation}
where the terminal cost $C$ is a constant. 

\subsection{Data-based CLBF Theorem}
\label{sec:result}
In this part, inspired by Definition~\ref{def:CLBF} of CLBF, we propose a novel data-based theorem, on which the constraints should be in the conceptual problem formulation at the beginning of Section~\ref{sec:theory}. Instead of explicitly using a dynamic model, the following theorem provides a sufficient condition for reachability and safety based on samples.

Before presenting the main theorem, we need the following Lemma~\ref{lemma1}, in addition to \eqref{eq:cost}, on the terminal cost $C$ to hold, so that $V(s_{\overline{\text{unsafe}}}) \geq \hat{c}$ and $V(s_{\overline{\text{safe}}}) < \hat{c}$, as required in (\ref{theorem: thm1_1}).

\begin{lemma}
\label{lemma1}
Suppose that $N$ is the maximum number of steps in each episode, let $C>\frac{c_{\max}(s,a)\left(1-\gamma^{N}\right)}{\gamma^{N}}$, when $\gamma<1$.
Under the controller $\pi$, if $s \in \mathcal{S}_{\overline{\text{unsafe}}}$\thinspace, $V(s) \geq \hat{c}$, and $s \in \mathcal{S}_{\overline{\text{safe}}}$ \thinspace, $V(s) < \hat{c}$.
\end{lemma}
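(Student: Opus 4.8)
The plan is to identify the critic at convergence, $V(s)=Q_{\mathrm{LB}}(s,\pi(s))$, with the expected discounted cost-to-go $\mathbb{E}_{\pi}\big[\sum_{t=0}^{\infty}\gamma^{t}c(s_{t},a_{t})\,\big|\,s_{0}=s\big]$, where by the tuple construction \eqref{eq:cost} the sets $\mathcal{S}_{\text{goal}}$ and $\mathcal{S}_{\text{unsafe}}$ behave as absorbing states with per-step cost $0$ and $C$ respectively. Given this, the dichotomy reduces to three steps: (a) bound $V$ from above on $\mathcal{S}_{\overline{\text{safe}}}$ by some $\overline V$; (b) bound $V$ from below on $\mathcal{S}_{\overline{\text{unsafe}}}$ by some $\underline V$; (c) choose $\hat c\in(\overline V,\underline V]$, which is possible exactly when $\overline V<\underline V$, and show this inequality is equivalent to the hypothesis on $C$.

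For (a), let $s\in\mathcal{S}_{\overline{\text{safe}}}$. By definition the closed-loop trajectory from $s$ never enters $\mathcal{S}_{\text{unsafe}}$, and under $\pi$ it reaches $\mathcal{S}_{\text{goal}}$ within the $N$-step horizon (after which it incurs cost $0$); before that, each step costs at most $c_{\max}:=\max_{s,a}c(s,a)$, so $V(s)\le\sum_{t=0}^{N-1}\gamma^{t}c_{\max}=c_{\max}\tfrac{1-\gamma^{N}}{1-\gamma}=:\overline V$. For (b), let $s\in\mathcal{S}_{\overline{\text{unsafe}}}=\mathcal{S}_{\text{unsafe}}\cup\mathcal{S}_{\text{irrecoverable}}$. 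If $s\in\mathcal{S}_{\text{unsafe}}$ then $s$ is absorbing at cost $C$ and $V(s)=\tfrac{C}{1-\gamma}$; if $s\in\mathcal{S}_{\text{irrecoverable}}$ then within at most $N$ steps the trajectory enters the absorbing set $\mathcal{S}_{\text{unsafe}}$ --- this is where the finiteness of the episode is used, as an ``inevitable'' excursion into $\mathcal{S}_{\text{unsafe}}$ need not materialize over an unbounded horizon --- say at time $k\le N$, so discarding the nonnegative pre-entry costs and summing the cost-$C$ tail gives $V(s)\ge\gamma^{k}\tfrac{C}{1-\gamma}\ge\gamma^{N}\tfrac{C}{1-\gamma}=:\underline V$. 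For (c), $C>\tfrac{c_{\max}(1-\gamma^{N})}{\gamma^{N}}\iff c_{\max}(1-\gamma^{N})<\gamma^{N}C\iff\overline V<\underline V$, so picking any $\hat c\in(\overline V,\underline V]$ yields $V(s)\le\overline V<\hat c$ for all $s\in\mathcal{S}_{\overline{\text{safe}}}$ and $V(s)\ge\underline V\ge\hat c$ for all $s\in\mathcal{S}_{\overline{\text{unsafe}}}$, proving the claim.

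The step I expect to be the real obstacle is the lower bound (b) for irrecoverable states: the definition of ``irrecoverable'' only guarantees that $\mathcal{S}_{\text{unsafe}}$ is reached with \emph{positive} probability within the episode, so a literal lower bound on $V(s)$ carries that escape probability as a multiplicative factor; making the estimate rigorous therefore requires either (effectively) deterministic closed-loop dynamics or replacing $C$ by $C$ over a uniform lower bound on that probability, and the bound above is the clean probability-one version. A second point worth stating explicitly is that $\overline V$ in (a) relies on a recoverable trajectory leaving the cost-accruing region within the $N$-step horizon, so that only finitely many discounted terms contribute --- the natural reading when $V$ is the per-episode return used in training and $\pi$ is a controller that already achieves reachability within an episode.
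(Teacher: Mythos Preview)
Your argument is correct and mirrors the paper's proof essentially step for step: the paper bounds $V$ on $\mathcal{S}_{\overline{\text{safe}}}$ by $\frac{c_{\max}(1-\gamma^{N})}{1-\gamma}$ using that the goal is absorbing with zero cost, bounds $V$ on $\mathcal{S}_{\overline{\text{unsafe}}}$ below by $\frac{c_{\min}(1-\gamma^{N})+C\gamma^{N}}{1-\gamma}$ (then sets $c_{\min}=0$, which is exactly your ``discard the nonnegative pre-entry costs''), and then chooses $\hat c$ in the resulting gap, which is nonempty precisely when $C>\frac{c_{\max}(1-\gamma^{N})}{\gamma^{N}}$. Your caveat about irrecoverable states only guaranteeing \emph{positive} (not unit) probability of hitting $\mathcal{S}_{\text{unsafe}}$ is well taken and is a gap the paper's own proof shares rather than addresses.
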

\begin{proof}
The proof can be found in Appendix~\ref{ap:proof2}.
\end{proof}

\begin{theorem}
\label{theorem:clbf}
If there exists a function $V(s):\mathcal{S}\rightarrow \mathbb{R_+}$ and positive constants $\alpha_{1}$, $\alpha_{2}$, $\alpha _{3}$, $\alpha _{4}$, such that
\begin{align}
    \begin{aligned}
    \label{theorem: thm1_1}
        \alpha _{1}c_\pi(s) \leq &V(s) < \min{(\alpha _{2}c_\pi(s),\hat{c})} < \hat{c},   \ \ \ \ \ \ \  \forall s \in \mathcal{S}_{\overline{\text{safe}}}\\ 
        \hat{c} \leq &V(s)\leq \hat{c}+\alpha _{3}c_\pi(s) < (1+\alpha_{3})\hat{c},  \  \   \forall s \in \mathcal{S}_{\overline{\text{unsafe}}}
    \end{aligned}
\end{align}
and
\begin{align}
    \begin{aligned}
        &\mathbb{E}_{s\sim \mu_N}\left( \mathbb{E}_{s^{\prime }\sim P_{\pi
        }}V(s^{\prime })\mathds{1}_\Delta(s^{\prime })-V(s)\mathds{1}_\Delta(s)\right) \\
        &< -\alpha _{4}\mathbb{E}_{s\sim \mu_N}c_\pi(s)\mathds{1}_\Delta(s)
    \end{aligned}
    \label{theorem:clbf2}
\end{align}%
where $c_{\pi}\left(s_{t}\right) \triangleq \mathbb{E}_{a \sim \pi} c\left(s_{t}, a_{t}\right)$, and $c_{\pi}\left(s\right) \leq \hat{c}, \forall s \in\mathcal{S}$. The cost function $c(s_{t},a_{t})=\mathbb{E}_{P\left(\cdot \mid s_{t}, a_{t}\right)}\left\|s_{t+1}-s_{goal}\right\|$ describes the distance to the goal set. $\mu_N(s)$ denotes the average distribution of $s$ over the finite $N$ time steps,
\begin{equation*}
    \mu_N(s)\doteq\frac{1}{N}\sum_{t=1}^N p(s|\rho, \pi,t)
\end{equation*}
$N$ is the maximum number of steps in each episode. $\mathds{1}_\Delta(s)$ denotes the function;
\begin{equation*}
   \mathds{1}_\Delta(s) =\left\{
\begin{array}{cc}
1 & s\in\Delta \\
0 & s\notin\Delta%
\end{array}%
\right.
\end{equation*}
where $\Delta =\mathcal{S}\backslash \left( \mathcal{S}_{\text{goal}} \cup \mathcal{S}_{\text{unsafe}}\right)$, $\mathcal{S}_{\text{goal}}=\left\{s \mid c_{\pi}(s) \leq  \delta\right\}=\left\{s \mid \left\|s-s_{\text{goal}}\right\| \leq \delta \right\}$. Note that $c_{\pi}(s) > \delta$, $\forall s \in \Delta$.

Then the followings hold: i) if $s_{0} \in \mathcal{S}_{\overline{\text{safe}}}, V(s_{0}) \leq \hat{c}$, the system is reachable with tolerance $ \delta$ and safe within $N$ steps; ii) if $s_{0} \in \mathcal{S}_{\overline{\text{unsafe}}}, V(s_{0})>\hat{c}$, the agent would reach the unsafe areas within $N$ steps.
\end{theorem}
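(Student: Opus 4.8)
The plan is to turn the one-step, in-expectation decrease condition \eqref{theorem:clbf2} into a bound on the distance-to-goal accumulated along the closed-loop trajectory by telescoping over the finite-horizon occupation measure $\mu_N$, and then to read reachability off that bound and safety off the threshold $\hat c$ that \eqref{theorem: thm1_1} places between $\mathcal{S}_{\overline{\text{safe}}}$ and $\mathcal{S}_{\overline{\text{unsafe}}}$. For part~i) I would fix $\rho=\delta_{s_0}$ with $s_0\in\mathcal{S}_{\overline{\text{safe}}}$, put $W_t\triangleq\mathbb{E}_{s\sim p(\cdot\mid\rho,\pi,t)}[V(s)\mathds{1}_\Delta(s)]$, and use the Chapman--Kolmogorov identity $\mathbb{E}_{s\sim p(\cdot\mid\rho,\pi,t)}\mathbb{E}_{s'\sim P_\pi}[f(s')]=\mathbb{E}_{s'\sim p(\cdot\mid\rho,\pi,t+1)}[f(s')]$ so that the left-hand side of \eqref{theorem:clbf2} collapses to $\tfrac1N(W_{N+1}-W_1)$. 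Because $V\ge 0$ gives $W_{N+1}\ge 0$, and because $s_0\in\mathcal{S}_{\overline{\text{safe}}}$ forces $s_1\in\mathcal{S}_{\overline{\text{safe}}}$ almost surely and hence $V(s_1)<\hat c$ by \eqref{theorem: thm1_1}, this already yields the accumulated-cost bound $\mathbb{E}_{s\sim\mu_N}[c_\pi(s)\mathds{1}_\Delta(s)]<\hat c/(\alpha_4 N)$.

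Safety in part~i) is almost immediate from the partition: since $s_0\in\mathcal{S}_{\overline{\text{safe}}}=\mathcal{S}\setminus(\mathcal{S}_{\text{unsafe}}\cup\mathcal{S}_{\text{irrecoverable}})$, Definitions~2--3 imply that no closed-loop trajectory from $s_0$ can reach $\mathcal{S}_{\text{unsafe}}$ with positive probability (nor pass through $\mathcal{S}_{\text{irrecoverable}}$), so $\mathbb{P}(s_t\in\mathcal{S}_{\text{unsafe}}\mid s_0,\pi,t)=0$ for all $t$ and along the trajectory $\mathds{1}_\Delta(s_t)$ coincides with the indicator of $\mathcal{S}_{\overline{\text{safe}}}\setminus\mathcal{S}_{\text{goal}}$. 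For reachability I would then combine the accumulated-cost bound with the two facts that $c_\pi(s)>\delta$ for every $s\in\Delta$ and that $\mathcal{S}_{\text{goal}}$ is absorbing under \eqref{eq:cost} (the only exit from $\Delta$ is into $\mathcal{S}_{\text{goal}}$, so $t\mapsto\mathbb{P}(s_t\in\Delta)$ is non-increasing): this gives $\sum_{t=1}^N\mathbb{P}(s_t\in\Delta)<\hat c/(\alpha_4\delta)$, hence $\mathbb{P}(s_N\in\Delta)<\hat c/(\alpha_4\delta N)$, and decomposing $\mathbb{E}\|s_N-s_{\text{goal}}\|$ over $\mathcal{S}_{\text{goal}}$ (where it is at most $\delta$) and $\Delta$ (whose probability mass and residual cost are both $O(1/N)$ by the same bound) shows that the expected distance to $s_{\text{goal}}$ is within the tolerance $\delta$ for $t$ near $N$, i.e.\ reachability within $N$ steps.

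For part~ii), by hypothesis $s_0\in\mathcal{S}_{\overline{\text{unsafe}}}=\mathcal{S}_{\text{unsafe}}\cup\mathcal{S}_{\text{irrecoverable}}$ (consistent with $V(s_0)>\hat c$ via \eqref{theorem: thm1_1}, whose two branches exhaust $\mathcal{S}$). If $s_0\in\mathcal{S}_{\text{unsafe}}$ the claim is trivial, the state being already unsafe and, by \eqref{eq:cost}, absorbing. If instead $s_0\in\mathcal{S}_{\text{irrecoverable}}\subseteq\Delta$, Definition~3 gives a positive-probability entry into $\mathcal{S}_{\text{unsafe}}$ at some finite time; re-running the telescoping argument with the upper bound $V(s_0)<(1+\alpha_3)\hat c$ from the unsafe branch of \eqref{theorem: thm1_1} yields $\sum_t\mathbb{P}(s_t\in\Delta)<(1+\alpha_3)\hat c/(\alpha_4\delta)$, which forbids the trajectory from lingering in $\Delta$; since every one-step transition out of an irrecoverable state carries positive probability of entering $\mathcal{S}_{\overline{\text{unsafe}}}$, this forces an entry into $\mathcal{S}_{\text{unsafe}}$ within $N$ steps.

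The main obstacle I anticipate is that \eqref{theorem:clbf2} is only an \emph{average} inequality over the occupation measure $\mu_N$ restricted to $\Delta$, not a sample-path or per-step decrease, so passing to statements about individual time indices $t\le N$ --- and reconciling the unavoidable $O(1/N)$ residual with the exact tolerance $\delta$ of Definition~1 --- is where the argument is delicate. The mechanism that makes it work is the absorbing structure of $\mathcal{S}_{\text{goal}}$ and $\mathcal{S}_{\text{unsafe}}$ baked into the cost tuples \eqref{eq:cost}, which renders $\mathbb{P}(s_t\in\Delta)$ monotone and lets a single endpoint-to-endpoint telescoped inequality control the whole horizon. A secondary subtlety in part~ii) is upgrading Definition~3's ``$\exists\,\hat t$'' to ``$\hat t\le N$'', which is precisely where the finite-horizon occupation bound, rather than the merely qualitative notion of irrecoverability, is needed.
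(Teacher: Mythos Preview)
Your proposal and the paper's proof share the same engine---telescoping the left-hand side of \eqref{theorem:clbf2} over the occupation measure $\mu_N$ via the Chapman--Kolmogorov identity so that only the endpoint terms $W_1$ and $W_{N+1}$ survive---but the two arguments diverge in how they extract the conclusion from that telescoped identity. The paper proceeds by \emph{contradiction}: it assumes the exit time from $\Delta$ is infinite, passes to the limit $\mu_N\to\mu=q_\pi$ (invoking the Abelian theorem and Lebesgue dominated convergence to justify the interchange of limit and integral), and shows that the left-hand side of \eqref{theorem:clbf2} then tends to a nonnegative number while the right-hand side is bounded above by $-\alpha_4\delta\lim_{t\to\infty}\mathbb{P}(s_t\in\Delta)$, forcing a negative probability. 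You instead keep $N$ fixed as the given episode length and work \emph{directly}: $W_{N+1}\ge0$ and $W_1<\hat c$ give the quantitative bound $\sum_{t=1}^N\mathbb{P}(s_t\in\Delta)<\hat c/(\alpha_4\delta)$, and the absorbing structure of $\mathcal{S}_{\text{goal}}$ from \eqref{eq:cost} converts this into a per-step bound $\mathbb{P}(s_N\in\Delta)=O(1/N)$.

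Your route is more elementary---no Abelian theorem, no dominated convergence, no stationary-distribution hypothesis---and it yields explicit rates rather than a bare existence statement. The paper's route, by contrast, establishes that the exit time is finite without tying it to the episode horizon, which is closer in spirit to the qualitative ``$\exists T$'' of Definition~1. Both arguments are equally informal at the final step of reconciling the telescoped bound with the exact tolerance $\delta$ (the paper simply asserts ``the agent will reach the goal region'' once $\mathbb{P}(s\in\Delta)=0$, without checking Definition~1's expectation bound; you correctly flag the $O(1/N)$ residual). Your treatment of safety in part~i) as a tautology of Definitions~2--3 is legitimate and matches the paper's one-line appeal to \eqref{theorem: thm1_1} at the end.
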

\begin{proof}
The proof can be found in Appendix~\ref{ap:proof1}.
\end{proof}

\subsection{Lyapunov Barrier Actor-Critic Algorithm}
\label{sec:lbac}
Recent advance in \cite{han2020actor} has guaranteed reachability by the Lagrangian relaxation method. Taking inspiration from their work, we extend to safety guarantee by designing an actor-critic RL algorithm. The proposed Algorithm~\ref{algo:LBAC} is named Lyapunov barrier actor-critic (LBAC), which gains a value function that satisfies the requirements of Theorem~\ref{theorem:clbf}, and a corresponding safe controller.

The control Lyapunov barrier critic function $Q_{\text{LB}}$ and the actor function (controller) $\pi_{\theta}(a_{t}|s_{t})$ are parametrized by $\phi$ and $\theta$, respectively.
Note that the stochastic controller $\pi_\theta$ is parameterized by a deep neural network $f_\theta$ that depends on $s$ and Gaussian noise $\epsilon$.
The goal is to construct the CLBF as the critic function with constraints~\eqref{theorem:clbf2} under the controller $\pi_{\theta}(a_{t}|s_{t})$.
By using the Lagrange relaxation technique \cite{bertsekas1997nonlinear}, $Q_{\text{LB}}$ is updated using gradient descent to minimize the following objective function
\begin{equation}
\begin{aligned}
\label{eq:critic objective}
J(\phi&)=\mathbb{E}_{\mathcal{D}}\left[\frac{1}{2}\left(Q_{\text{LB}}(s, a)-Q _{\text{target}}(s, a)\right)^{2}\right. \\
&\left.+\lambda(Q_{\text{LB}}(s',f_{\theta}(\epsilon,s'))\mathds{1}_\Delta(s')-Q_{\text{LB}}(s,a)\mathds{1}_\Delta(s)+ \alpha_{4}\hat{c})\right]
\end{aligned}
\end{equation}
where $Q_{\text{target}}$ is the approximation target related to the chosen control Lyapunov barrier candidate, $\lambda$ is a Lagrange multiplier that controls the relative importance of the inequality condition \eqref{theorem:clbf2}. $\mathcal{D}$ is the set of collected transition pairs that are determined in (\ref{eq:cost}) and Lemma \ref{lemma1}. The control Lyapunov barrier candidate acts as a supervision signal to the control Lyapunov barrier critic function.

LBAC is based on the maximum entropy framework \cite{haarnoja2018soft}, which can improve controller exploration during learning. A minimum entropy constraint is added to the above optimization problem to derive the following objective function
\begin{equation}
\begin{aligned}
\label{eq:LBAC}
 J(\theta) =& \mathbb{E}_{(s,a,s',c)\sim\mathcal{D}}[ Q_{LB}(s, f_{\theta}(\epsilon,s))\\
 &+ \beta (\log(\pi_\theta(f_\theta(\epsilon,s)|s))+\mathcal{H}_t))] 
\end{aligned}
\end{equation}
where $\beta$ is a Lagrange multiplier that controls the relative importance of the minimum entropy constraint, $\mathcal{H}_t$ is the desired entropy bound.

In the actor-critic framework, the parameters of the controller are updated through stochastic gradient descent, which is approximated by 
\begin{equation}
\begin{aligned}
\nabla_\theta J(\theta)
&= \beta \nabla_\theta  \log(\pi_\theta(a|s))+  \beta \nabla_a  \log(\pi_\theta(a|s))\nabla_\theta f_\theta(\epsilon,s)\\
&+ \nabla_{a'}Q_{\text{LB}}(s',a')\nabla_\theta f_\theta(\epsilon,s')
\end{aligned}
\label{LBAC_policy_gradient}
\end{equation}

Finally, the values of Lagrange multipliers $\lambda$ and $\beta$ are adjusted by gradient ascent to maximize the following objectives, respectively,
\begin{equation}
\begin{aligned}
J(\lambda)&=\lambda \mathbb{E}_{\mathcal{D}_{\Delta}}\left[Q_{\text{LB}}\left(s^{\prime}, f_{\theta}\left(s^{\prime}, \epsilon\right)\right) \mathds{1}_{\Delta}\left(s^{\prime}\right) \right.\\
&\left. -\left(Q_{\text{LB}}(s, a)-\alpha_{4} \hat{c}\right) \mathds{1}_{\Delta}(s)\right],\\
J(\beta)&=\beta \mathbb{E}_{\mathcal{D}}\left[\log \pi_{\theta}(a \mid s)+\mathcal{H}_{t}\right]
\label{eq:lagrange}
\end{aligned}
\end{equation}

During training, the Lagrange multipliers are updated by 
$$\lambda \leftarrow \max (0,\lambda +  \bar{\delta} \nabla_\lambda J(\lambda)), \ \ \ \ 
\beta \leftarrow \max (0,\beta +  \bar{\delta} \nabla_{\beta} J(\beta)) $$
where $\bar{\delta}$ is the learning rate. 
The pseudocode of the proposed algorithm is shown in Algorithm~\ref{algo:LBAC}.
\\
\\
\vspace{-1cm}
\begin{algorithm}[!]
   \caption{Lyapunov Barrier Actor-Critic (LBAC)}
   \label{algo:LBAC}
\begin{algorithmic}
    \REQUIRE 
    Maximum episode length $N$; maximum iteration steps $M$ 
   \REPEAT
   \STATE Sample $s_0$ according to $\rho$
   \FOR{$t=0$ to $N$}
   \STATE Sample $a_{t}$ from $\pi_\theta(a_{t}|s_{t})$ and step forward
   \STATE Observe $s_{t+1}$, $c_{t}$ and store $(s_{t},a_{t},c_{t},s_{t+1},\mathcal{I})$ in $\mathcal{D}$
   \ENDFOR
   \FOR{$i=1$ to $M$}
    \STATE Sample mini-batches of transitions from $\mathcal{D}$ and update $Q_{\text{LB}}$, $\pi$, Lagrange multipliers with \eqref{eq:critic objective}, \eqref{eq:LBAC}, \eqref{eq:lagrange}
   \ENDFOR
   \UNTIL{\eqref{theorem:clbf2} is satisfied}
\end{algorithmic}
\end{algorithm}

\section{Results and Validation}
\label{sec:experiment}
In this section, we consider a 2D quadrotor navigation task, i.e., aiming to reach a target while avoiding obstacles, as illustrated in Figure \ref{fig:experiment_setup}. The experiment setup is detailed in Appendix~\ref{env:2D}. First, we show separate CLFs and CBFs can lead to local optimums by implementing a CLF-CBF based Quadratic Program (CLF-CBF-QP). Then, we show the effectiveness of the proposed LBAC algorithm and evaluate it in the following aspects:
\begin{itemize}
    \item Training convergence: does the proposed training algorithm converge with random parameter initialization;
    \item Validation of CLBF: how do the learned CLBFs fit the goal and obstacles in the 2D quadrotor navigation task, and does the reachability and safety condition, i.e., Theorem \ref{theorem:clbf}, hold for the learned controllers;
    \item Sim-to-Real transfer: can we transfer the simulation training result directly to real-world robots, e.g., using a CrazyFlie 2.0 quadrotor.
\end{itemize}

In this part, the performance of LBAC on the CMDP tasks is evaluated compared with Risk Sensitive Policy Optimization (RSPO) \cite{geibel2005risk}, Safety Q-Functions for RL (SQRL) \cite{srinivasan2020learning}, and Reward Constrained Policy Optimization (RCPO) \cite{tessler2018reward}. We use the public codebase of \cite{thananjeyan2021recovery} to implement the comparison experiments.
The hyperparameters are described in Appendix~\ref{ap:hyperpara}.

\subsection{Conflicts between CLFs and CBFs}
\label{sec:clfcbf}
To show there exist conflicts between CLFs and CBFs as separate certificates, we implemented a model-based CLF-CBF-QP controller \cite{ames2014control} which incorporates a CLF and CBFs as constraints through quadratic programs. As shown in Figure~\ref{fig:drone_stuck_b}, the quadrotor easily gets stuck before the wall which is in front of the target. This is because the attraction of the CLF is balanced by the repulsion of CBFs, as illustrated by Figure \ref{fig:drone_stuck_a}. The quadrotor can still successfully reach the target if it luckily avoids conflicting areas. We also tried CLFs and CBFs as separate critics in a multi-objective RL setting, but failed to converge. The failure of the above CLF-CBF controllers motivates our CLBF approach which satisfies both safety and reachability in this 2D quadrotor navigation task, as illustrated in Figure~\ref{fig:CLBF gradient}.

\begin{figure}
    \vspace{0.2cm}
    \centering
    \subfigure[Illustration of CLF-CBF]
    {
    \includegraphics[width=0.45\columnwidth]{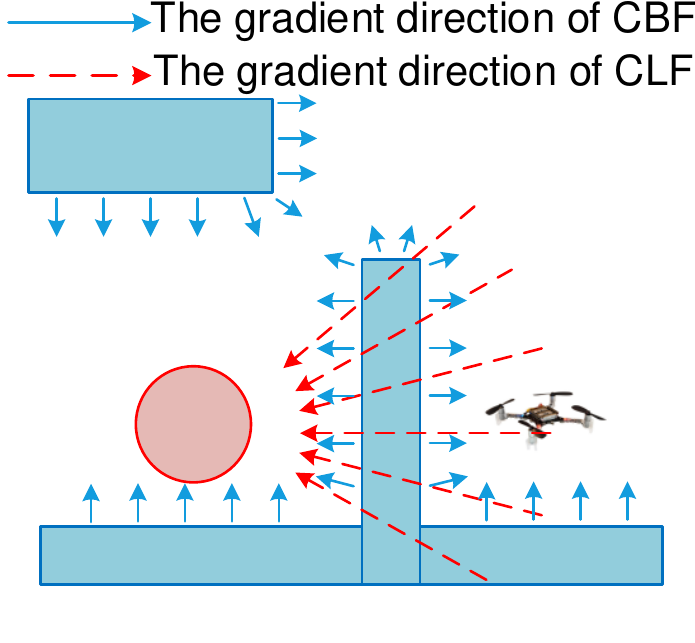}
    \label{fig:drone_stuck_a}
    }
    \hspace{-0.2cm}
    \centering
    \subfigure[Results of CLF-CBF-QP]
    {
    \includegraphics[width=0.47\columnwidth]{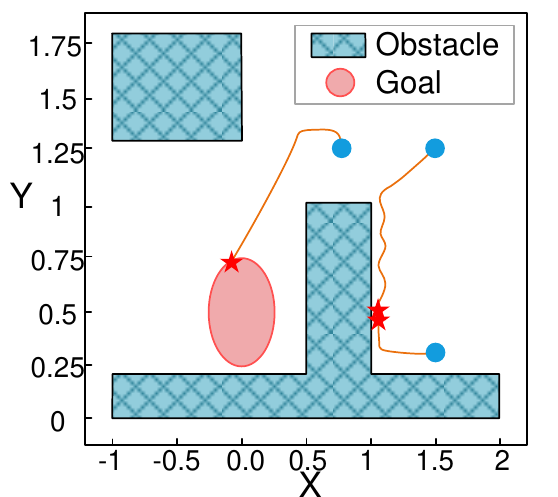}
    \label{fig:drone_stuck_b}
    }
    \vspace{-0.2cm}
    \caption{Performance of a CLF-CBF-QP controller. (a) is an intuitive illustration of CLF-CBF. In (b), lines are trajectories. The blue circles stand for the starting points. The red stars represent the final position.}
    \label{fig:clf-cbf}
    \vspace{-0.2cm}
\end{figure}

\subsection{Training Convergence}
The main criterion we are interested in is the convergence of the controller during the training process. Each approach is trained with five different random seeds. The total cost and number of violations during training are plotted in Figure~\ref{fig:training process}. Among the RL algorithms to be compared, LBAC, RSPO, and SQRL can converge within $2300$ episodes, while RCPO fails to converge even in $3000$ episodes. As shown in Figure~\ref{fig:training process}, LBAC leads to a fewer number of violations during training than other model-free safe RL methods. 
\begin{figure}[htb]
    \vspace{-0.4cm}
    \centering
    \subfigure[Total Cost]{
    \includegraphics[width=0.49\columnwidth]{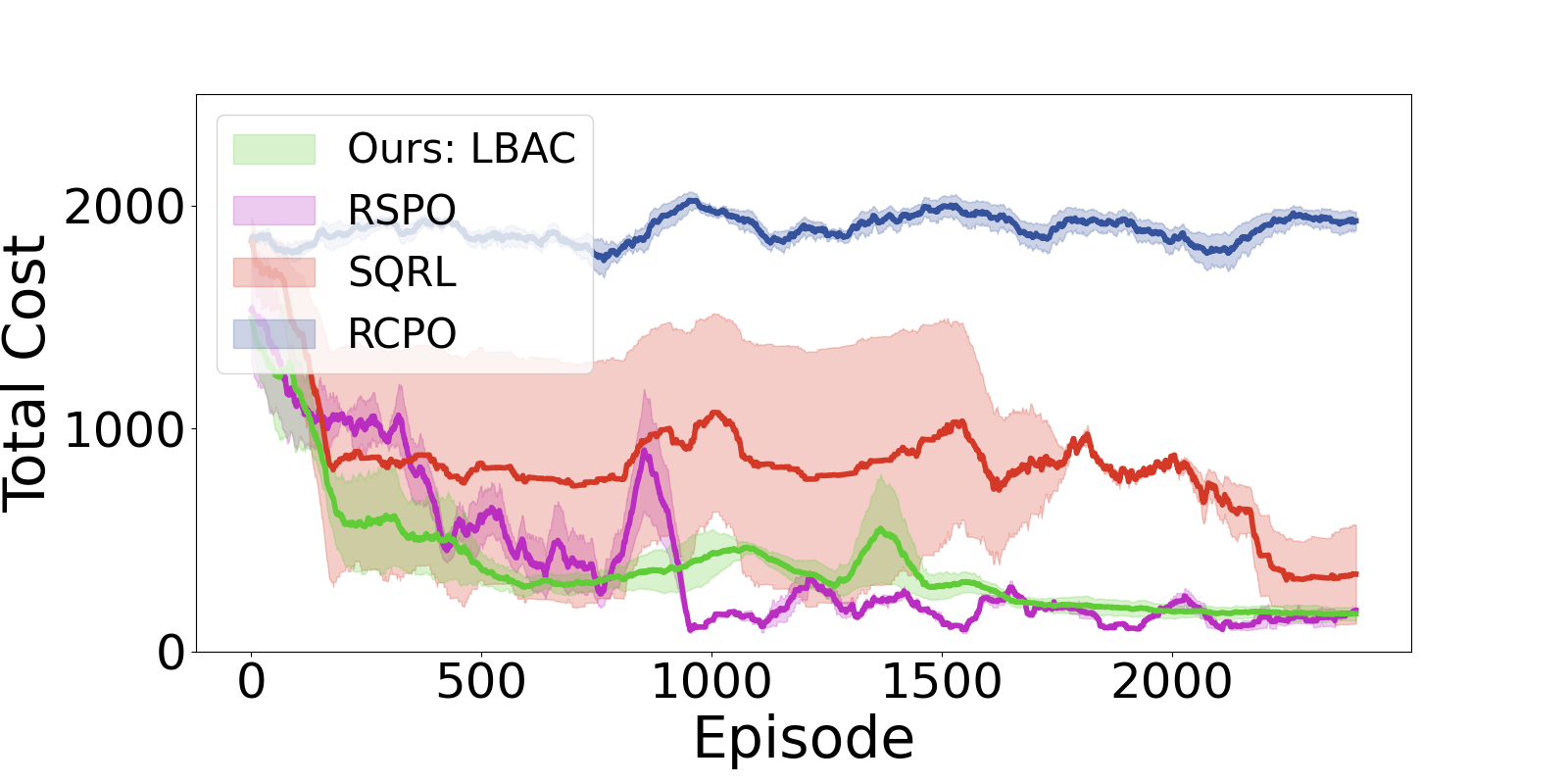}
    \label{fig:kin_car_violation}
    }
    \hspace{-0.7cm}
    \centering
    \subfigure[Training-time violations]{
    \includegraphics[width=0.49\columnwidth]{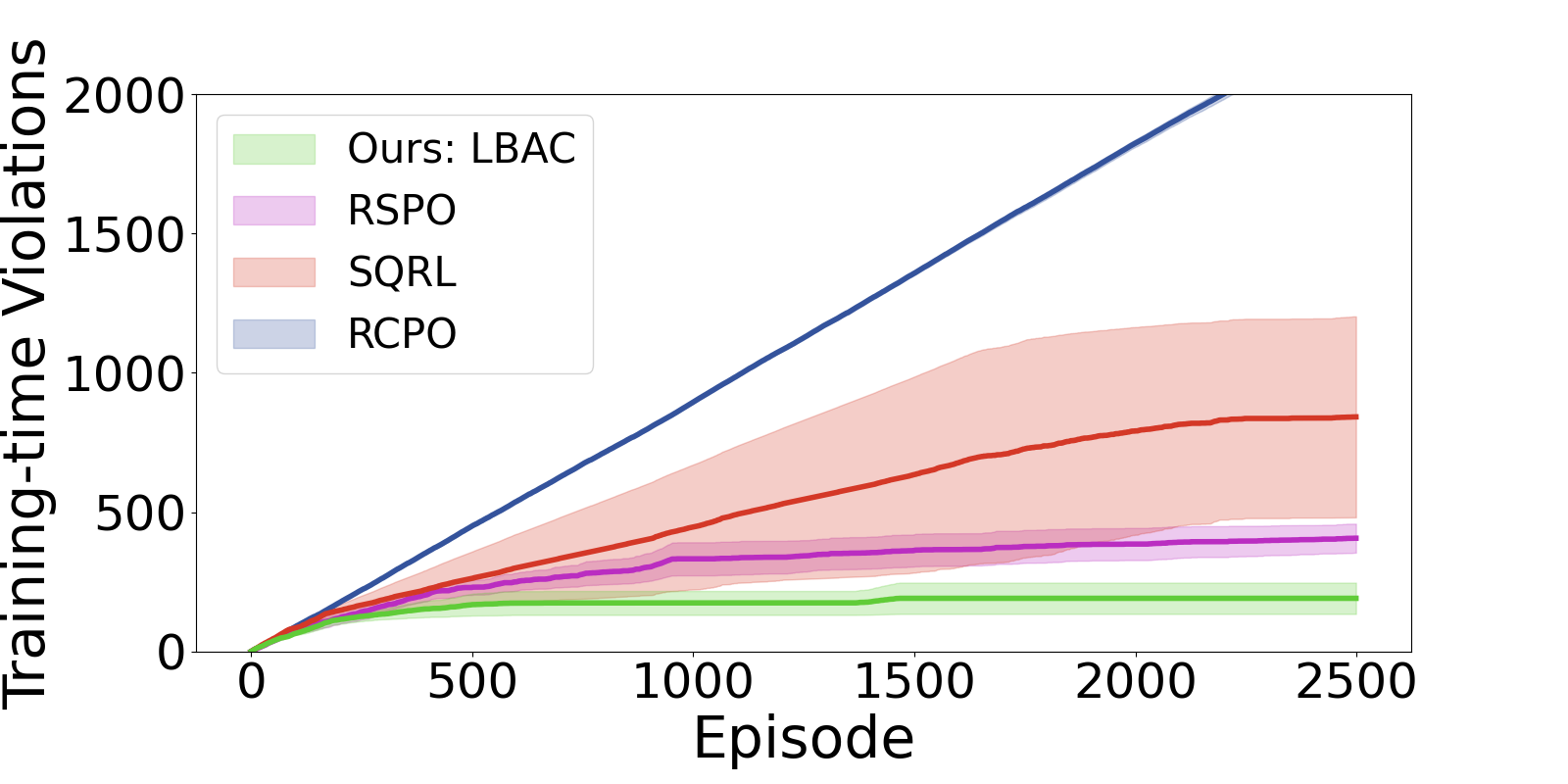}
    \label{fig:2-D navigation_violation}
    }
    \vspace{-0.2cm}
    \caption{{Total cost and the number of violations during training. The Y-axis indicates the total cost in one episode in (a) and total violation times during training in (b). The X-axis indicates the total episodes. The shaded region shows the 1-SD confidence interval of five random seeds.}}
    \vspace{-0.2cm}
    \label{fig:training process}
\end{figure}

\subsection{Validation of CLBF}
In this part, we examine the learned control Lyapunov barrier critic function. We pick the controllers and corresponding CLBFs trained in 1000, 1500, and 2000 episodes. The contour plots of the CLBFs are shown in Figure~\ref{fig:CLBF_main} as a function of $x$ and $y$, where $\{v_{x},v_{y}\}$ is set to $\{0,0\}$. The white lines are the safety boundaries of the CLBFs, i.e. when $V(s)=\hat{c}$ and $\hat{c}$ is set 2000.
As shown in Figure~\ref{fig:CLBF_main}, we find that the safety boundary of CLBF where $V(s)=\hat{c}$ gradually approaches the obstacle boundary with increasing training episodes. However, we also noticed some unsafe corner cases are considered as safe (such as the bottom right corner of the left obstacle). This could be due to the exploration and exploitation dilemma LBAC suffers as a model-free RL algorithm.

\begin{figure}[htb]
    \vspace{0.2cm}
    \centering
    \includegraphics[width=0.98\columnwidth]{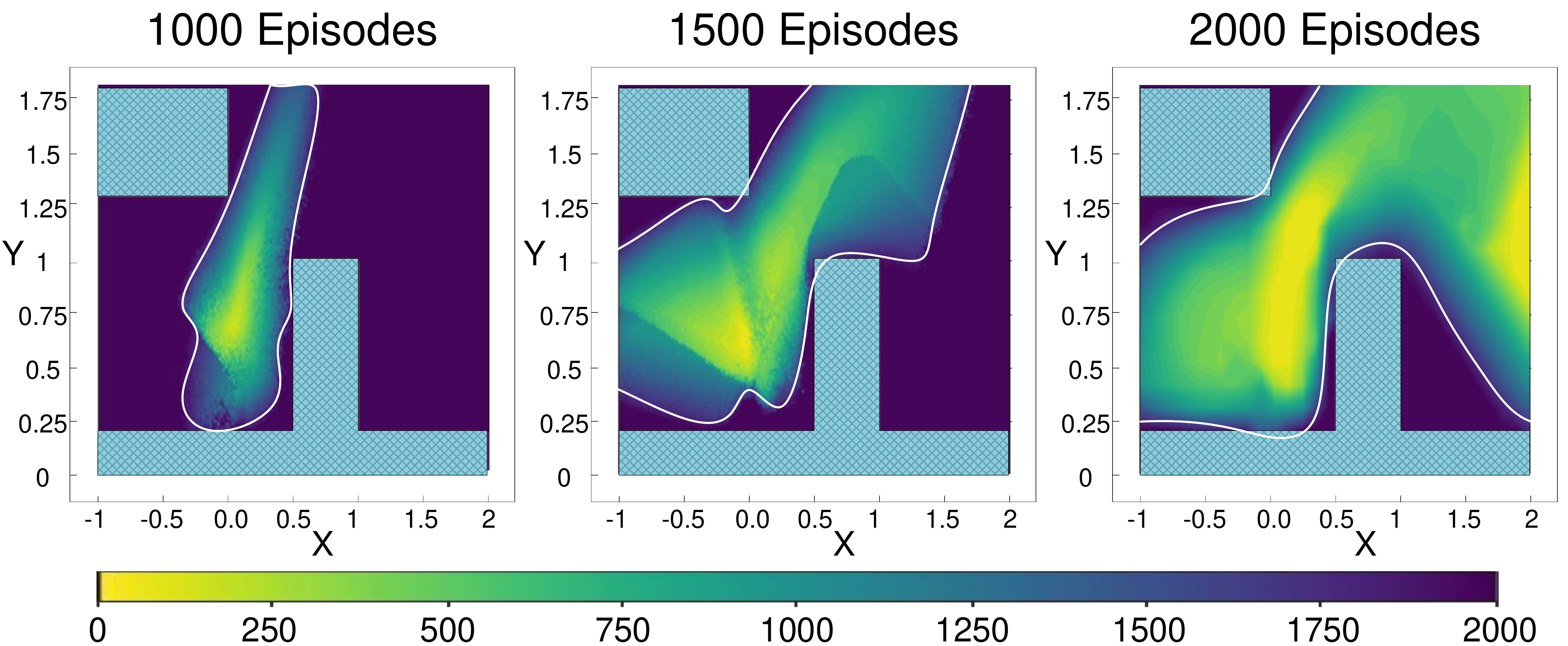}
    \vspace{-0.2cm}
    \caption{The contour plots of the CLBF. The white lines show the contour of the learned CLBF. The color bar denotes the function value. From left to right, the contour plots are the CLBFs trained in 1000 episodes, 1500 episodes and 2000 episodes.}
\label{fig:CLBF_main}
\vspace{-0.2cm}
\end{figure}

We also validate the learned CLBF by showing the outcomes of the trajectory rollouts starting from uniformly sampled initial positions. This is because of the well-known fact that it is challenging to initialize uniformly throughout the state space in a model-free setting. For example, we can hardly make a robot have a specific velocity at a particular position. Figure~\ref{fig:drone_traj_rollout} shows that the quadrotors starting from the unsafe region would be violating, while those that start in the safe region would successfully reach the goal. We present the changes in CLBF values along the trajectories in Figure~\ref{fig:drone_traj_clbf_a}, and the averaged changes in CLBF value of these trajectories in Figure~\ref{fig:drone_traj_clbf_b}. We can observe that the averaged value has a decreasing trend, which aligns with the theory before. These results indicate that the learned CLBF is valid.

\begin{figure}[htb]
    \vspace{-0.2cm}
    \centering
    {
    \includegraphics[width=0.98\columnwidth]{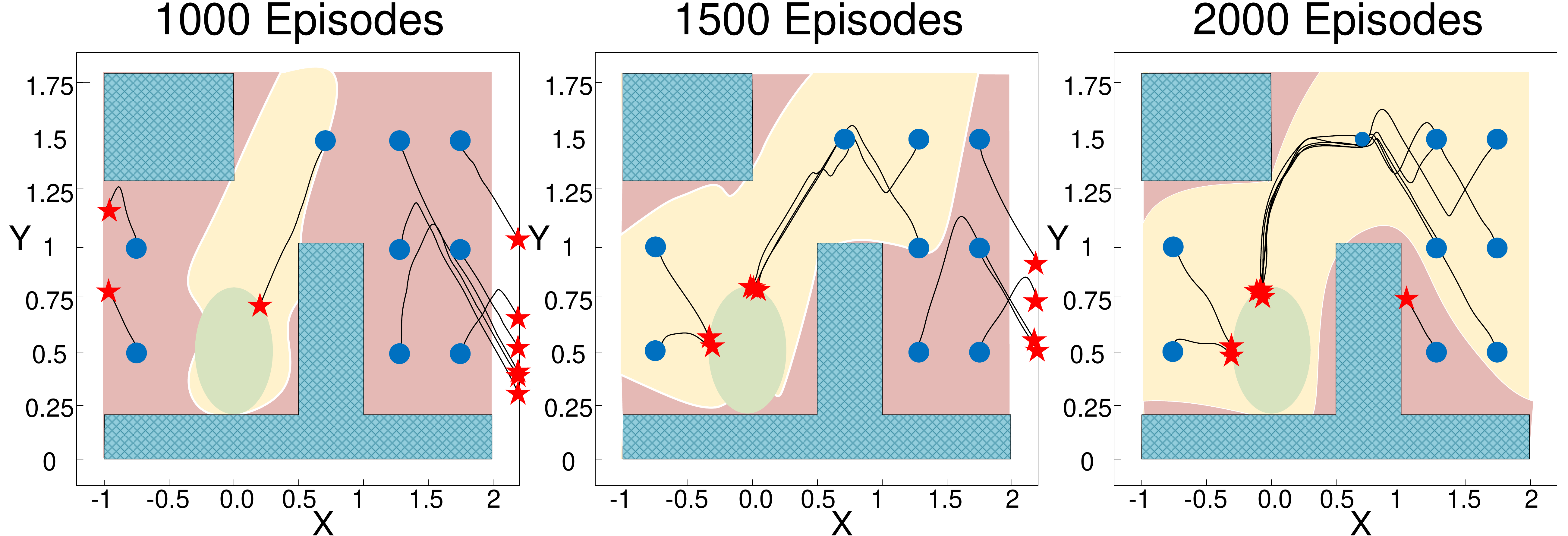}
    }
    \vspace{-0.5cm}
    \caption{Trajectories of the learned LBAC controllers in the simulator. The shaded area corresponds to the unsafe region. The green ellipse area stands for the goal. The blue circles are the initial positions, while the red stars are the end positions.}
    \label{fig:drone_traj_rollout}
\end{figure}
\vspace{-0.4cm}

\begin{figure}[htb]
    \vspace{-0.2cm}
    \centering
    \subfigure[The changes in CLBF value]
    {
    \includegraphics[width=0.45\columnwidth]{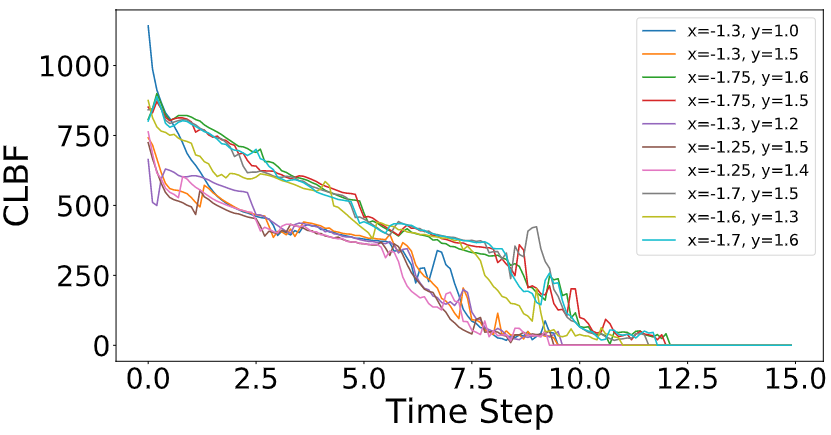}
    \label{fig:drone_traj_clbf_a}
    }
    \centering
    \subfigure[The averaged changes]
    {
    \includegraphics[width=0.45\columnwidth]{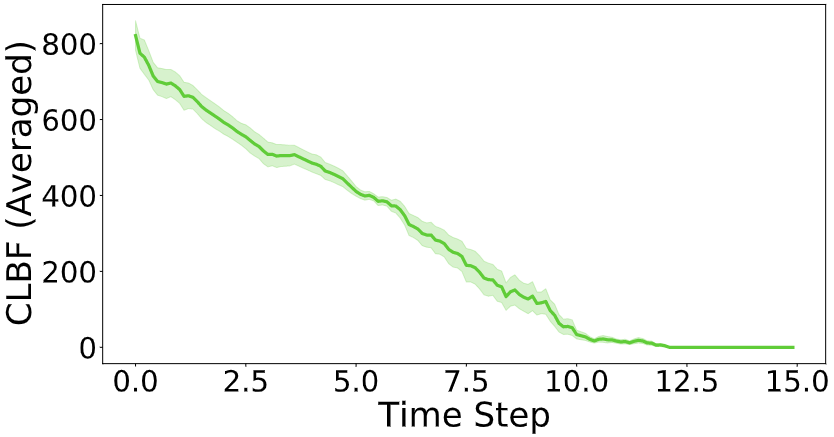}
    \label{fig:drone_traj_clbf_b}
    }
    \vspace{-0.2cm}
    \caption{The changes in CLBF value under different initial conditions. In (a), we show the changes in CLBF value along the trajectories starting from ten different initial positions. In (b), the averaged change in CLBF value of these trails is plotted. The solid line indicates the average value and shadowed region for the 1-SD confidence interval of these trails.}
    \label{fig:drone_traj_clbf}
\end{figure}
\vspace{-0.4cm}

\subsection{Sim-to-Real Transfer}
\label{sec:sim-to-real}
In this part, we evaluate LBAC by deploying controllers learned in the simulators to the physical robot. As shown in Figure~\ref{fig:experiment_setup}, a nano Crazyflie 2.0 quadrotor is used to achieve the autonomous navigation task and a motion capture system is used for state estimation in the real world. The trajectories of the Crazyflie starting from different initial positions are shown in Figures~\ref{fig:trajectory0.35} and \ref{fig:trajectory1.3}. The controllers trained by LBAC outperform other model-free safe RL algorithms in terms of both reachability and safety.

\begin{figure}[htb]
    \centering
    \subfigure[Illustration of CLBF]{
    \includegraphics[width=0.31\columnwidth]{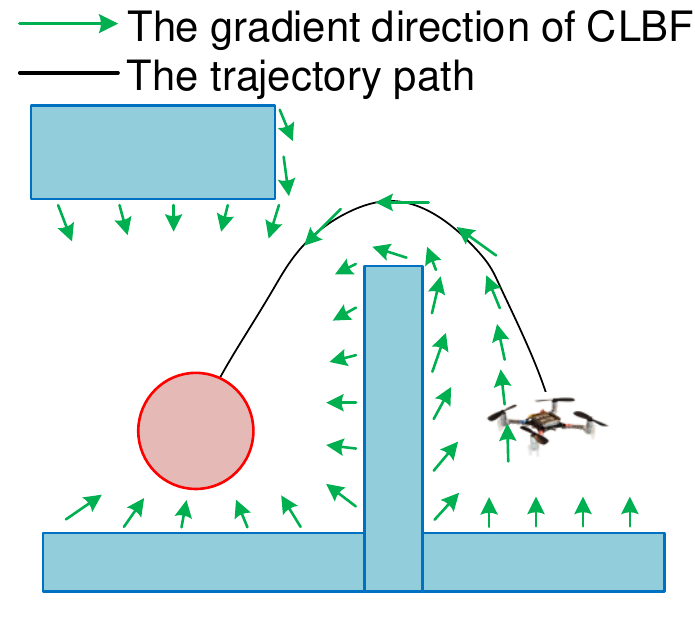}
    \label{fig:CLBF gradient}
    }
    \hspace{-0.34cm}
    \centering
    \subfigure[Height 0.35m]
    {
    \includegraphics[width=0.31\columnwidth]{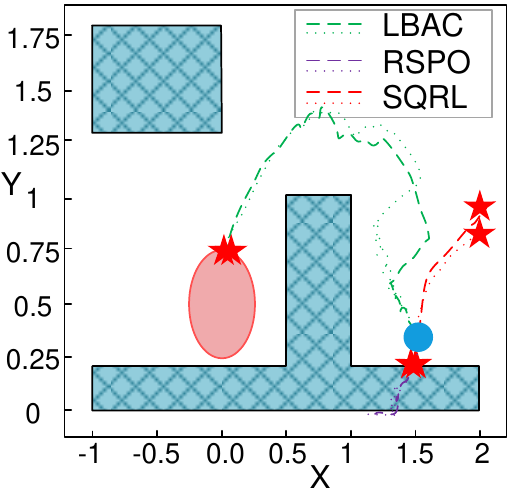}
    \label{fig:trajectory0.35}
    }
    \hspace{-0.34cm}
    \centering
    \subfigure[Height 1.2m]{
    \includegraphics[width=0.31\columnwidth]{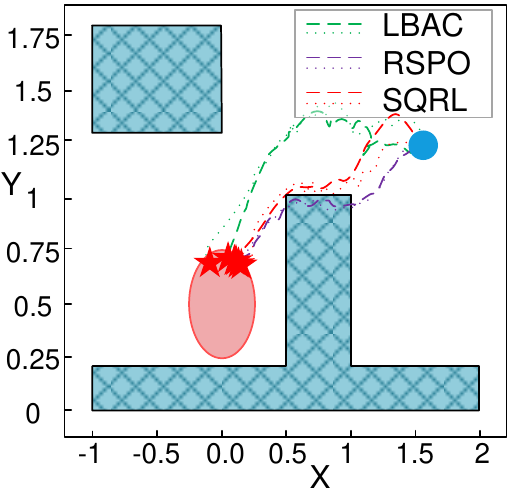}
    \label{fig:trajectory1.3}
    }
    \vspace{-0.2cm}
    \caption{Controllers are evaluated in real-world using a Crazyflie 2.0 quadrotor. (a) is an intuitive illustration of CLBF. In (b) and (c), the quadrotor's initial heights are 0.35m and 1.2m. The blue circle represents the starting points, and the red stars stand for the reached positions.}
    \label{fig:experiment}
\end{figure}
\vspace{-0.2cm}

\section{Conclusion}
\label{sec:conclusion}
In this paper, the control Lyapunov barrier function is extended to the constrained Markov decision process, and a data-based theorem is proposed to analyze closed-loop reachability and safety. Based on the theoretical results, a Lyapunov Barrier-based Actor-Critic method is proposed to search for a controller. The proposed algorithm is evaluated on a 2D quadrotor navigation task with safety constraints. Compared to existing model-free RL algorithms, the proposed method can reliably ensure reachability and safety in both simulation and real-world tests. In the future, more experiments will be conducted to validate the effectiveness and scalability of our approach. We also plan to improve the robustness of the learned controller using methods such as domain randomization and adversarial training \cite{brunke2021safe}.

\begin{appendices}

\section{Proof of Lemma~\ref{lemma1}} 
\label{ap:proof2}
\begin{proof}
When $\hat{s} \in \mathcal{S}_{\overline{\text{safe}}}$, it leads to the goal state within $N$ steps. Thus, $V(\hat{s}) =\underset{a \sim \pi}{\mathrm{E}}\left[\sum_{t=0}^{\infty} \gamma^{t} c(s_{t},a_{t}) \mid s_{0}=\hat{s}\right] < \sum_{t=0}^{N-1} \gamma^{t} c_{\max}(s,a)=\frac{c_{\max}(s,a)\left(1-\gamma^{N}\right)}{1-\gamma} $. In order to have $V(\hat{s}) < \hat{c}$, we set $\frac{c_{\max}(s,a)\left(1-\gamma^{N}\right)}{1-\gamma} < \hat{c}$.
When $\hat{s} \in \mathcal{S}_{\overline{\text{unsafe}}}$, it leads to unsafe state within $N$ steps.
Thus, $V(\hat{s}) \geq \sum_{t=0}^{N-1} \gamma^{t} c_{\min}(s,a)+\sum_{t=N}^{\infty} \gamma^{t}C=\frac{c_{\min}(s,a)\left(1-\gamma^{N}\right)+C \gamma^{N}}{1-\gamma}$.
In order to have $V(\hat{s}) \geq \hat{c}$, we set $\frac{c_{\min}(s,a)\left(1-\gamma^{N}\right)+C \gamma^{N}}{1-\gamma} \geq \hat{c}$.
Rearranging, we have $C\geq\frac{(1-\gamma)\hat{c}-c_{\min}(s,a)\left(1-\gamma^{N}\right)}{\gamma^{N}}$. With $c_{\min}(s,a)=0$, it is simplified to $C\geq\frac{1-\gamma}{\gamma^{N}} \hat{c} > \frac{c_{\max}(s,a)\left(1-\gamma^{N}\right)}{\gamma^{N}}$. 
To this end, the condition \eqref{theorem: thm1_1} is achieved. 
\end{proof}

\section{Proof of Theorem~\ref{theorem:clbf}} \label{ap:proof1}

\begin{proof}
To prove that $N$ is finite based on the conditions and assumptions where $N=\max \{t: \mathbb{P}(s \in \Delta|\rho, \pi, t)>0\}$, we will assume that $N$ is infinity and prove by contradiction. $N=\infty$ if for any $\epsilon$ there exists an instant $t>\epsilon$ such that $\mathbb{P}(s\in \Delta|\rho, \pi, t)>0$.
In that case, the finite-horizon sampling distribution $\mu_N(s)$ turns into the infinite-horizon sampling distribution $\mu(s)= \lim_{N\rightarrow \infty}\mu_N(s)=\lim_{N\rightarrow \infty}\frac{1}{N}\sum_{t=1}^N p(s|\rho, \pi,t)$.
The existence of $\mu(s)$ is guaranteed by the existence of $q_\pi(s)=\lim _{t \rightarrow \infty} p(s \mid \rho, \pi, t)$, which has been commonly exploited by many RL literature \cite{han2020actor,han2021reinforcement}. Since the sequence $\{p(s|\rho,\pi,t), t\in\mathbb{Z}_+\}$ converges to $q_\pi(s)$ as $t$ approaches $\infty$, then by the Abelian theorem, the sequence $\{\frac{1}{T}\sum_{t=1}^T p(s|\rho,\pi,t), T\in\mathbb{Z}_+\}$ also converges and $\mu(s) = q_\pi(s)$. 
Then one naturally has that the sequence $\{\mu_N(s)V(s), T\in\mathbb{Z}_+\}$ converges pointwise to $q_\pi(s)V(s)$. 

According to Lebesgue's dominated convergence theorem \cite{royden1988real}, if a sequence ${f_n(s)}$ converges point-wise to a function $f$ and is dominated by some integrable function $g$ in the sense that,$\vert f_n(s) \vert \leq g(s), \forall s\in \mathcal{S},\forall n$, then one has $\lim_{n\rightarrow\infty}\int_\mathcal{S}f_n(s)\mathrm{d}s= \int_\mathcal{S}\lim_{n\rightarrow\infty}f_n(s)\mathrm{d}s$.

Applying this theorem to the left-hand side of \eqref{theorem:clbf2}

\begin{equation}
\begin{aligned}
    &\mathbb{E}_{s\sim \mu}\left(\mathbb{E}_{s^{\prime}\sim p_{\pi}}V(s^\prime)\mathds{1}_\Delta(s^\prime)-V(s)\mathds{1}_\Delta(s)\right)\\
    =&\int_\mathcal{S}\lim_{N\rightarrow\infty}\frac{1}{N}\sum_{t=1}^N p(s|\rho, \pi,t)(\int_\mathcal{S} p_{\pi}(s^{\prime}|s)V(s^{\prime})\mathds{1}_\Delta(s^\prime)\mathrm{d}s^{\prime}\\
    &-V(s)\mathds{1}_\Delta(s))\mathrm{d}s\\
    = &\lim_{N\rightarrow\infty}\frac{1}{N}\sum_{t=1}^N \int_\mathcal{S}V(s^{\prime})\mathds{1}_\Delta(s^\prime) \int_\mathcal{S}p_{\pi}(s^{\prime}|s)p(s|\rho, \pi,t) \mathrm{d}s\mathrm{d}s^{\prime}\\
    - &\lim_{N\rightarrow\infty}\frac{1}{N}\sum_{t=1}^N \int_\mathcal{S}p(s|\rho, \pi,t)V(s)\mathds{1}_\Delta(s) \mathrm{d}s\\
     =&\lim_{N\rightarrow\infty}\frac{1}{N}(\sum_{t=2}^{N+1} \mathbb{E}_{ p(s|\rho, \pi,t)}V(s)\mathds{1}_\Delta(s)\\
     &-\sum_{t=1}^{N} \mathbb{E}_{ p(s|\rho, \pi,t)}V(s)\mathds{1}_\Delta(s))\\
     =&\lim_{N\rightarrow\infty}\frac{1}{N}(\mathbb{E}_{p(s|\rho, \pi,N+1)}V(s)\mathds{1}_\Delta(s)- \mathbb{E}_{ \rho(s)}V(s)\mathds{1}_\Delta(s))
\label{*}
\end{aligned}
\end{equation}
Since $\mathbb{E}_{ \rho(s)}V(s)$ is finite, thus the limitation value $\lim_{N\rightarrow\infty}\frac{1}{N}(\mathbb{E}_{ \rho(s)}V(s)\mathds{1}_\Delta(s))=0$. The above equation equals to $\lim_{N\rightarrow\infty}\frac{1}{N}\mathbb{E}_{ p(s|\rho, \pi,N+1)}V(s)\mathds{1}_\Delta(s)$.
Note that $V(s) \geq \alpha_{1}c_{\pi}(s)$, $\forall s \in\mathcal{S}$, and $c_{\pi}(s) > \delta$, $\forall s \in \Delta$. Thus, $\lim_{N\rightarrow\infty}\frac{1}{N}\mathbb{E}_{ p(s|\rho, \pi,N+1)}V(s)\mathds{1}_\Delta(s)
    \geq\lim_{N\rightarrow\infty}\frac{\alpha_1\delta}{N}\mathbb{E}_{ p(s|\rho, \pi,N+1)}\mathds{1}_\Delta(s) = 0$

Since $\mu(s)=q_\pi(s)$, 
the right-hand side of \eqref{theorem:clbf2} 
equals to 
$-\alpha_{4}\mathbb{E}_{s\sim q_\pi}c_\pi(s)\mathds{1}_\Delta(s) \leq-\alpha_{4}\mathbb{E}_{s\sim q_\pi}\delta\mathds{1}_\Delta(s)=-\alpha_{4}\delta\lim_{t\rightarrow\infty}\mathbb{P}(s\in\Delta|\rho, \pi,t)$.
Combining the above inequalities with \eqref{theorem:clbf2}, one has $\lim_{t\rightarrow\infty}\mathbb{P}(s\in\Delta|\rho, \pi,t)<0$, which is contradictory to the fact that $\mathbb{P}(s\in\Delta|\rho, \pi,t)$ is nonnegative. Thus there exist a finite $N$ such that $\mathbb{P}(s\in\Delta|\rho, \pi,t) =0$ for all $t>N$. In other word, the agent will reach the goal region or the unsafe region within $N$ steps.
According to \eqref{theorem: thm1_1}, $s_{0} \in \mathcal{S}_{\overline{\text{safe}}}, V(s_{0}) < \hat{c}$ where the agent will reach the goal region and avoid the unsafe region, while $s_{0} \in \mathcal{S}_{\overline{\text{unsafe}}}, V(s_{0}) \geq \hat{c}$ where the agent will reach the unsafe region within $N$ steps. The process of building such function $V$ is described in Section~\ref{sec:clbf}.
\end{proof}

\section{2D Quadrotor Navigation}
\label{env:2D}
The state of the 2D quadrotor model is defined as $s = [p_x, p_y, v_x, v_y]$, with control input $a = [v_{x_{\text{des}}},v_{y_{\text{des}}}]$. In this experiment, the controller is expected to navigate a 2D quadrotor to the goal set $\boldsymbol{S}_{\text{goal}}$ without colliding with the obstacles. We define the state space as $\mathcal{S}=\left\{\boldsymbol{s}: \boldsymbol{s}_{\mathrm{lb}} \leq \boldsymbol{s} \leq \boldsymbol{s}_{\mathrm{ub}}\right\}$ with $\boldsymbol{s}_{\mathrm{lb}}=[-1,0,-0.25,-0.25]$ and $\boldsymbol{s}_{\mathrm{ub}}=[2,1.8,0.25,0.25]$, representing the lower bound and upper bound of the set of the valid states. The action space is set as $\mathcal{A}=\left\{\boldsymbol{a}: -\boldsymbol{a}_{\mathrm{b}} \leq \boldsymbol{a} \leq \boldsymbol{a}_{\mathrm{b}}\right\}$ with $\boldsymbol{a}_{\mathrm{b}}=[0.25,0.25]$, by considering the real world hardware limitation. The cost function is designed as $c = \sqrt{4p_x^{2}+(p_y-0.5)^{2}}$. 
We set the obstacle set $\mathcal{S}_{o1}=\left\{\boldsymbol{s}: 0.5 \leq p_x \leq 1 , 0.2 \leq p_y \leq 1 \right\}$, $\mathcal{S}_{o2}=\left\{\boldsymbol{s}: -1 \leq p_x \leq 0 , 1.3 \leq p_y \leq 1.8 \right\}$ and $\mathcal{S}_{o3}=\left\{\boldsymbol{s}: p_z \leq 0.2 \right\}$, the unsafe state set $\mathcal{S}_{\text{unsafe}}=\left\{\boldsymbol{s}: \mathcal{S}_{o1} \cup \mathcal{S}_{o2} \cup \mathcal{S}_{o3}\right\}$, the goal state set $\mathcal{S}_{\text{goal}}=\left\{\boldsymbol{s}: \sqrt{p_x^{2}+(p_y-0.5)^{2}} \leq 0.3 \right\}$. Once the quadrotor reaches the $\mathcal{S}_{\text{unsafe}}$, the episode ends in advance and the cost function is set as $C=2000$. The episodes are of maximum length 200 and time step $dt = 0.1$ s. In the experiments, we use Bitcraze’s Crazyflie 2.0 quadrotors. We train the controllers in the simulator gym-pybullet-drones \cite{panerati2021learning} based on PyBullet. In the real world, we use a motion capture system for state estimation.

\section{Hyperparameter setting} \label{ap:hyperpara}
For LBAC, there are two networks: the controller network (actor) and the control Lyapunov barrier network (critic). The controller network is represented by a fully-connected neural network with two hidden layers of size 256 each, with the ReLU activation function, outputting the mean and standard deviations of a Gaussian distribution. A fully-connected neural network represents the control Lyapunov barrier critic network with two hidden layers of size 256, each with a ReLU activation function. We use the vanilla Soft Actor-Critic algorithm \cite{haarnoja2018soft} for 500 episodes to explore the environment effectively as a warm start. The hyperparameters can be found in Table~\ref{tab:hyper}

{\tiny
\begin{table}[htb]
\vspace{-0.1cm}
\caption{Hyperparameter setting in LBAC}\label{tab:hyper}
\vspace{-0.3cm}
\begin{center}
\begin{tabular}{l|c c c c}
Hyperparameters     & 2D Quadrotor Navigation\\\hline
    Minibatch size           & 512 \\
    Total episode & 2500 \\
    Actor learning rate     &$3\times10^{-4}$  \\
    Critic learning rate    &  $3\times10^{-4}$\\
    Terminal cost $C$    & 2000\\
    Discount factor $\gamma$   &0.999\\
\end{tabular}
\end{center}
\vspace{-0.1cm}
\end{table}
}

In RSPO and SQRL, another safety critic network $Q_{\text{risk}}$ is needed to estimate the discounted future probability of constraint violation with discounted $\gamma_{\text {risk}}$. The safety threshold $\varepsilon_{\text {risk }} \in[0,1]$ is an upper-bound on the expected risk of the action. In this paper, the safety critic network shares the same architecture as the task critic network, except that a sigmoid activation is added to the output layer to ensure that the outputs are on $[0,1]$. We use the same hyperparameter settings as LBAC in RSPO, RCPO, and SQRL. The other hyperparameters can be found in Table~\ref{tab:hyper2}. 

{\tiny
\begin{table}[htb]
\vspace{-0.1cm}
\caption{Hyperparameter setting in safe RL}\label{tab:hyper2}
\vspace{-0.3cm}
\begin{center}
\begin{tabular}{l|c c c c}
Hyperparameters      & 2D Quadrotor Navigation\\\hline
    RCPO $\left(\gamma_{\text {risk }}, \lambda\right) $        & $\left(0.99, 3000\right) $ \\
    RSPO $\left(\gamma_{\text {risk }}, \varepsilon_{\text {risk }}, \lambda\right) $         & $\left(0.99, 0.2, 10000\right) $\\
    SQRL $\left(\gamma_{\text {risk }}, \varepsilon_{\text {risk }}, \lambda\right) $         & $\left(0.99, 0.2,5000\right) $ \\
\end{tabular}
\end{center}
\vspace{-0.1cm}
\end{table}
}

\end{appendices}

\newpage
\bibliographystyle{IEEEtran}
\bibliography{ref}

\end{document}